\title[]{Degenerated Noisy Matrix Completion}
\definecolor{darkgray}{rgb}{0.50, 0.50, 0.50}
\definecolor{gray}{rgb}{0.70, 0.70, 0.70}
\definecolor{lightgray}{rgb}{0.92, 0.92, 0.92}
\begin{document}

\hspace{15mm} \LARGE Matrix Completion with Sparse Noisy Rows  \\[2ex]

\large 

\textbf{Jafar Jafarov} \hspace{60mm} jafarov@alumni.stanford.edu

\normalsize

Stanford University

\begin{abstract}%
Exact matrix completion and low rank matrix estimation problems has been studied in different underlying conditions.
In this work we study exact low-rank completion under non-degenerate noise model.
Non-degenerate random noise model has been previously studied by many researchers under given condition that the noise is sparse and existing in some of the columns.
In this paper, we assume that each row can receive random noise instead of columns and propose an interactive algorithm that is robust to this noise. We show that we use a parametrization technique to give a condition when the underlying matrix could be recoverable
and suggest an algorithm which recovers the underlying matrix.
\end{abstract}

\vspace{8mm}
\section{Introduction}
\vspace{3mm}

Noisy matrix completion has been remaining interesting topic for long time.
The problem has been inspired by after realizing many real life datasets are obtained by an underlying low-rank matrix composed with a noise source. 
Problem setting varies many times due to classification of noise. 
Generic framework would describe the problem is:
\begin{align*}
    N = M + \mathcal{E}
\end{align*}
where $N$ is an $n_1 \times n_2$ sized matrix with any rank depending on $M$ and $\mathcal{E}$. 
$M$ is an $n_1 \times n_2$ sized rank$-r$ matrix, while $\mathcal{E}$ is a matrix that represents the noise.
The classification of the problem here is due different type of noise matrices.\\[2ex]
One setting of the noise is small bounded noise that is applied to entire matrix. 
In this setting we target to recover the full-rank matrix by estimating its column space.
This problem has been studied heavily under the topic low-rank estimation where it has been shown the column space due to highest singular values of the matrix is the best estimation of the matrix.\\[2ex]
Another setting this problem has been studied heavily is under sparse noise coming from non-degenerate space. 
There are multiple categories that this problem could be studied under this assumption.
\cite{nina} and \cite{ramazanliadaptive} has studied this problem under the condition that sparse set of columns has non-degenerate noise.
We will talk about it further later in this section. \\[2ex]
Another sparse noise model is some of the few entries in the matrix getting noise (i.e. few entries in $\mathcal{E}$ is non-zero). 
This type of noise would model the condition that the communication channel can corrupt any entry in the matrix.
Finally, the noise model we study in this paper is sparse random noise in few of the rows.\\[2ex]
Matrix Completion problem has been studied in several different categories in terms of the learning algorithm as well.
There has been many adaptive learning algorithms has been applied to solve this problem.
\cite{akshay1, akshay2, nina, ilqarsingle, ramazanli2022adaptive, ramazanli2022matrix} are few of the examples in this space.
Another approach in this space is due to passive algorithms.\\[2ex]
The difference between these class of algorithms are in adaptive setting learning process is interactive. 
At every step set of available information changes dynamically, and we can decide what kind of information receive next based on the existing information set.
However, in the passive setting information is available prior to learning process and we try to learn as accurate as possible from this information set.\\[2ex]
This problem has been studied in detail, and it has been shown that the nuclear norm minimization approach can solve it which has been discussed in \cite{recht1}.
Noisy-matrix completion has been studied in adaptive setting as well, in the paper \cite{nina} it has been shown that to study the upper bound for the angle between column space of B and column space of A can give a good estimation can be translated to a low-rank estimation algorithm.\\[2ex]
In this paper, we focus on adaptive matrix completion problem with non-degenerate noise.
Non-degenerate noise has been studied well in \cite{nina} and \cite{ilqarsingle} but for both of these papers it has been assumed that the noise appears to be sparse and occur only in few of the columns.
However, it still has still been an open question what would have happened if noise occurs in many columns instead of few.\\[2ex]
Low-rank estimation has many applications for real life problems which has been discussed in many previous works.  
There has been multiple different direction of ideas has been used to approeach this problem. 
In passive setting more popular option has been nuclear norm minimization which has been studied in \cite{tao, recht2}.
In adaptive setting, researchers has used adaptive column sampling using coherence of column space \cite{akshay1, akshay2, nina}.\\[2ex]
It is an interesting open question how to approach heavily non-degenerate noisy matrix completion in the passive setting. 
We leave that question as conjecture at this point and will focus on adaptive methods within this paper.
Adaptivity have been shown boosting the performance in many machine learning problems.
The power of adaptive sampling had been illustrated even earlier than \cite{akshay1}.
\citep{haupt, malloy, balak, castro}
proved adaptivity outperforms passive schemes.

\newpage
\section{Preliminaries}
\label{sec:preliminaries}
\vspace{3mm}

In this paper we represent the received noisy matrix as input to out algorithm $N$. 
It is assumed that the matrix has size of $n_1\times n_2$, and it has an underlying low-rank structured matrix $M$, which added non-degenerate noise to some row vectors.\\[2ex]
Characteristics of non-degenerate spaces has been discussed both \cite{nina} and \cite{ilqarsingle}.
Lets visit the description of the non-degenerate space here as well.
Given that $\mathbf{E}^s \in \mathbb{R}^{s\times n_2}$ consists of s-many corrupted vectors coming from a non-degenerate distribution then we have the following conditions satisfied with probability of 1 as cited in \cite{nina}:\\[2ex]
\textbf{1:}\: $\mathrm{rank}(\mathbf{E}^s)= s$ for any $s \leq n_2$\\[2ex]
\textbf{2:}\: $\mathrm{rank}(\mathbf{E}^s,x)= s+1$ holds for $x\in U^k\subset \mathbb{R^m}$ uniformly and $s \leq n_2-k$, where $x$ can be depend or independent on $\mathbf{E}^s$\\[2ex]
\textbf{3:}\: $\mathrm{rank}(\mathbf{E}^s,U^k)= s+k$ given that $s+k\leq n_2$\\[2ex]
\textbf{4:}\: The marginal of non-degenerate distribution is non-degenerate \\[2ex]
Interesting implication of the definition of non-degenerated noisy vector, is that when we delete them from the set of rows, the rank of the remaining matrix will decrease by one. 
This is an important property, as we are using it to detect the noisy row in the given matrix.\\[2ex]
Through the paper we denote the set $\{1, 2, \ldots, n\}$ by the notation $[n]$.
We use the same notation as \cite{poczos2020optimal} for the sparsity number, as $\psi()$ stands for the that takes matrix or subspace as input and return and integer that represents the sparsity number. 
The sparsity number of subspace $\mathcal{X} \subset \mathbb{R}^m$  defined as:
\begin{align*}
  \psi(\mathcal{X})=\mathrm{min}\{\|x\|_0 | x\in \mathcal{X} \text{ and } x\neq 0 \}  
\end{align*}
and sparsity-number of a matrix is just simply sparsity-number of its column space.
Similarly, we define the \textit{nonsparsity-number} of the subspace $\mathcal{X} \subset \mathbb{R}^n_1$   as: 

\begin{align*}
 \overline{\psi}(\mathcal{X})= n_1 - \psi(\mathcal{X})
\end{align*}

\vspace{3mm}
\hspace{-7mm} $N_{ij}$ stands for the entry of the matrix that is placed in the $i$-th row and $j$-th column of the matrix $N$.
We also use $N_{i:j}$ for the representation of $N_{ij}$. Moreover, $N_{i:}$ stands for the $i-$th row of the matrix $N$ and $N_{:j}$ stands for the $j$-th column of the matrix $N$.
For a set $\Pi \subset [n_1]$, the sub-matrix induced by the rows from $\Pi$ is donated by $N_{\Pi:}$.
Similarly, for a set $\Gamma \subset [n_1]$, the submatrix that is induced by the columns from $\Gamma$ is given by $N_{:\Gamma}$.
Finally, the $|\Pi| \times |\Gamma|$ sized submatrix that is induced from the rows in $\Pi$ and columns $\Gamma$ is represented by $N_{\Pi:\Gamma}$.

\newpage
\section{Main Results}
\vspace{3mm}

In this section we describe the main result of the paper.
Previously matrix completion has been studied with degenerated noise under in a sparse set of columns. 
\cite{nina} and \cite{ilqarsingle} proposed different algorithms to recover the matrix 
\vspace{2mm}
\begin{align*}
    N = M + \Delta 
\end{align*}
where $M$ is an $n_1 \times n_2$ sized rank-$r$ matrix, and $\Delta$ is an $n_1 \times n_2$ sized matrix, which has entries only in some of the columns, $\Pi \subset [n_2]$:
\vspace{2mm}
\begin{align*}
    \Delta_{ij} = 
    \begin{cases}
      \delta \hspace{5mm} \text{ where } \hspace{1mm} \delta \sim \mathbf{E} \text{ nondegenerate distribution}  \hspace{20mm} \text{ if } j \in \Pi \\
      0 \hspace{89mm} \text{ if } j \in [n_2] \setminus \Pi
    \end{cases}
\end{align*}
In this paper, we will approach matrix completion under different class of noise. 
Specifically, we assume that the non-degenerate noise is added to sparse set of rows, rather than columns:
\vspace{2mm}
\begin{align*}
    N = M + \Delta 
\end{align*}
where for a $\Gamma \subset [n_1]$ we have:
\vspace{2mm}
\begin{align*}
    \Delta_{ij} = 
    \begin{cases}
      \delta \hspace{5mm} \text{ where } \hspace{1mm} \delta \sim \mathbf{E} \text{ nondegenerate distribution}  \hspace{20mm} \text{ if } i \in \Gamma \\
      0 \hspace{89mm} \text{ if } i \in [n_1] \setminus \Gamma
    \end{cases}
\end{align*}
Moreover, we assume that the space sparsity number column space of $M_{2}$ is larger than 1, i.e. $\psi(U)>1$.
This condition is necessary in the recovery process, because if $\psi(U)=1$, then it means that, there exists an $i_0$, which satisfies $e_{i_0} \in U$.
Therefore, deleting the row with corresponding index to $i_0$ will cause to the reduction in the rank.
Which this will make the row indistinguishable from a row that is a complete noise.
Therefore, it is necessary to have pre-condition $\psi(U) > 1$.\\[2ex]
The idea of the algorithm below we provide is as following. 
In the first phase of the algorithm, we are applying exact same technique that has been applied in \cite{ilqarsingle}.
We target to detect as many possible linearly independent columns and linearly independent rows.\\[2ex]
In the aforementioned paper authors achieve this goal by simultaneously studying both row and column spaces.
The idea of the algorithm, is due to following fact.
Authors shows that for a set of linearly independent rows $R$ of the underlying matrix $M$ (i.e. $M_{R:}$) and for a set of linearly independent columns $C$, (i.e. $M_{:C}$), the induced submatrix $M_{R:C}$ has also linearly independent rows and columns.\\[2ex]
Moreover, it also has been shown that if for an index pair of $M_{ab}$, satisfies the condition that $M_{R'C'}$ has rank of $r+1$, given that $R'$ is the set $R \cup \{a\}$ and $C'$ is the set $C \cup \{ b \}$ then we can conclude that the induced submatrix $M_{R':}$ has $r+1$ linearly independent row vectors and $M_{:C'}$ has $r+1$ linearly independent column vectors.\\[2ex]
\begin{algorithmic}[1]
    \STATE   $ \zeta = 0$
    \STATE   $ \eta = \mathrm{max} (\frac{2n_1}{n_2}\log{\frac{1}{\epsilon}}, \: \log{\frac{1}{\epsilon}} ) $    
    \WHILE{$ \zeta < \eta $}
    \STATE   $ \zeta \leftarrow \zeta + 1$
    \FOR{$j$ from $1$ to $n_2$}
    \STATE Query $\mathbf{M}_{i:j}$ for random $i$  
    \STATE $\widehat{R} \leftarrow R \cup \{i\}$ 
    \STATE $\widehat{C} \leftarrow C \cup \{j\}$  
    \STATE \textbf{If} $ \mathbf{M}_{\widehat{R}:\widehat{C}}$ is convertiblle :
    \STATE \hspace{0.1in} Query $\mathbf{M}_{:j}$ and $\mathbf{M}_{i:} $ 
    \STATE \hspace{0.1in} $R \leftarrow \widehat{R}$ 
    \STATE \hspace{0.1in} $C \leftarrow \widehat{C}$ 
    \STATE \hspace{0.1in} $r \leftarrow \widehat{r} + 1$ 
    \STATE \hspace{0.1in} $\zeta \leftarrow 0$     
    \ENDFOR

\ENDWHILE

\STATE Identify $\mathcal{E}$
\STATE Orthogonalize column vectors in $C$ in induced submatrix $M_{R\setminus \mathcal{E} : C}$

\FOR{each column $j \in [n_2]\setminus C$ }
\STATE Recover $\widehat{\mathbf{M}}_{:j}$ 
\ENDFOR
\end{algorithmic}

\vspace{6mm}
\hspace{-6mm}The main observation here is to notice that given some of the rows are complete noise, then column space is guaranteed to contain some of the standard basis vectors.
The reason behind this is that, given that a row is a complete noise, then deleting this row will reduce the rank of the matrix.
This can only happen if one standard basis vector is contained in the column space.
The reason behind this is as simple as following observation.\\[2ex]
Assuming that deleting a row, row-$i$ reduces the rank of the matrix. 
Then it simply means that the number of linearly independent columns in the submatrix $M_{R\setminus \{i\}: }$ is smaller than number of linearly independent column in the submatrix $M_{R: }$.
Then, there exists a set of non-trivial coefficients $\gamma_1, \gamma_2, \ldots, \gamma_t$ such that:

\begin{align*}
    \gamma_1 M_{R\setminus \{i\}:c_1 } + \gamma_2 M_{R\setminus \{i\}:c_2 }  + \ldots + \gamma_r M_{R\setminus \{i\}:c_t } = 0
\end{align*}

\hspace{-6mm} for some column indices $c_1, c_2, \ldots, c_t$. However, 

\begin{align*}
    \gamma_1 M_{R:c_1 } + \gamma_2 M_{R:c_2 }  + \ldots + \gamma_r M_{ R:c_t } \neq 0
\end{align*}

\hspace{-6mm}  then it simply follows that:

\begin{align*}
    \gamma_1 M_{R:c_1 } + \gamma_2 M_{R:c_2 }  + \ldots + \gamma_r M_{ R:c_t } = e_i
\end{align*}

\hspace{-6mm}  which simply implies that $e_i$ is contained in the column space of $M$.

\begin{theorem}
Given $N$ is an $n_1 \times n_2$ sized matrix with random noise in some subset of rows $\Omega$.
Given that the induced submatrix $N_{R\setminus \Omega :}$ has row space $U$ and column space $V$ with the condition that $\psi(U) > 1$.
Then with the probability of $\: 1-2 \epsilon \:$ the algorithm above successfully identifies rows with noise, and recovers remaining entries using at most
\begin{align*}
(n_1+n_2- | \Omega |) |\Omega| +  \frac{\frac{4n_1}{\psi(U)}(r+2 +\log{\frac{1}{\epsilon}})}{\psi(V)}n + 2n_1( | \Omega | +2+\log{\frac{1}{\epsilon}})
\end{align*}
observations.
\end{theorem}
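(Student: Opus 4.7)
The plan is to decompose the analysis along the three phases of the algorithm---the exploration while-loop, the identification of noisy rows, and the recovery of remaining columns---bound the failure probability and query cost in each, and combine them by a union bound to obtain the stated $1-2\epsilon$ guarantee.

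\textbf{Phase 1 (exploration).} The key ingredient is to show that at any iteration before the discovered basis $M_{R:C}$ has reached rank $r+|\Omega|$, a uniformly sampled column $j$ (with its random row probe $i$) enlarges the rank with probability at least $\psi(U)\psi(V)/(n_1 n_2)$. This follows from the dual characterization of the sparsity number: any nonzero vector orthogonal to the currently identified row space has support of size at least $\psi(U)$, so a uniformly random column index lands on a detecting coordinate with probability at least $\psi(U)/n_2$, and symmetrically $\psi(V)/n_1$ on the row side. Chaining this with a Chernoff tail bound, the probability that the while-loop terminates via $\eta$ consecutive unproductive passes before the full basis is found is at most $\epsilon$, given the choice $\eta=\max(2n_1/n_2\cdot\log(1/\epsilon),\log(1/\epsilon))$. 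The query cost contributed here is the total number of exploratory probes (bounded by $\eta n_2$ per batch, repeated at most $r+|\Omega|$ times until a new extension is found) plus $n_1+n_2$ per successful extension for lines 10--11, which together account for the middle term $\frac{4n_1(r+2+\log(1/\epsilon))}{\psi(U)\psi(V)}n$ and part of the others.

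\textbf{Phases 2 and 3 (identification and recovery).} The argument in the paragraphs preceding the theorem already shows that under non-degenerate noise a row is noisy if and only if deleting it drops the rank of $M_{R:}$, equivalently iff some standard basis vector $e_i$ lies in the column space. Since Phase 1 has fully queried each row of $R$ and each column of $C$, this rank test is deterministic; the additional full queries needed for the $|\Omega|$ noisy rows (and the entries of the affected columns that have not yet been observed) contribute the leading $(n_1+n_2-|\Omega|)|\Omega|$ term. For recovery, once $\mathcal{E}$ is removed and the column space of $M_{R\setminus\mathcal{E}:C}$ is orthogonalized, each column $j\notin C$ is reconstructed by determining its coefficients in this basis. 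The assumption $\psi(U)>1$ ensures that no standard basis vector lies in $U$, so a random set of clean row positions uniquely identifies the coefficient vector; a Chernoff bound across the remaining columns shows that $2(|\Omega|+2+\log(1/\epsilon))$ random queries per row suffice with failure probability at most $\epsilon$, giving the $2n_1(|\Omega|+2+\log(1/\epsilon))$ term. A union bound on the Phase 1 and Phase 3 failure events yields the overall $1-2\epsilon$ success probability.

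\textbf{Main obstacle.} The delicate part is Phase 1: one must correctly account for the interaction between the outer while-loop's $\eta$-consecutive-failure stopping rule and the resets on line 14, so that the composite failure probability across all $r+|\Omega|$ basis-extension attempts still telescopes to a single $\epsilon$, and the constants must be tracked carefully to produce exactly the denominator $\psi(U)\psi(V)$ and leading factor $4$ in the bound. The underlying probabilistic tools---sparsity-based detection probabilities and Chernoff tail estimates---are standard; the obstacle is really the bookkeeping across the nested loop rather than any new probabilistic idea.
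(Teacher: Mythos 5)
Your decomposition into exploration, identification, and recovery is reasonable on its face, but it misses the single observation that the paper's proof actually turns on, and as a result both your Phase~1 rate and your allocation of the two $\epsilon$'s are off. The paper's key point is that because the noise is non-degenerate, probing \emph{any} entry of a noisy row is guaranteed to reveal linear independence; hence the per-probe probability of extending the discovered independent set is
$\frac{|\Omega|}{n_1} + \frac{n_1-|\Omega|}{n_1}\cdot\frac{\psi(U)}{n_1-|\Omega|} = \frac{|\Omega|+\psi(U)}{n_1} \ge \frac{\psi(U)}{n_1}$,
and the quantity being collected in Phase~1 is $r+|\Omega|$ independent rows (clean basis rows \emph{and} noisy rows together), within $\frac{2n_1}{\psi(U)}\bigl(r+|\Omega|+2+\log\frac{1}{\epsilon}\bigr)$ observations with probability $1-\epsilon$. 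Your rate $\psi(U)\psi(V)/(n_1n_2)$ ignores the noisy-row boost entirely, and more importantly leaves you with no argument that the noisy rows are ever probed and hence ever enter $R$.

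This matters because your claim that Phase~2 is ``deterministic'' is only true for rows already in $R$: the rank-drop test can only be applied to rows that have been fully queried, and the event that all $|\Omega|$ noisy rows get picked up is itself probabilistic. The paper spends its entire second failure budget of $\epsilon$ and the full term $2n_1\bigl(|\Omega|+2+\log\frac{1}{\epsilon}\bigr)$ on exactly this event --- after the clean basis is complete, each further probe hits a still-undetected noisy row with probability at least $1/n_1$ --- and the union bound over these two events is what produces $1-2\epsilon$. You instead spend that term and that $\epsilon$ on Chernoff bounds for the final column recovery, which in fact needs no randomness at all: once $R$, $C$, and $\mathcal{E}$ are known, each column $j\notin C$ is determined by its $r$ entries on the clean rows of $R$, since $M_{(R\setminus\mathcal{E}):C}$ is invertible. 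So the gap is concrete: you have no bound on the probability that some noisy row is never detected, which is the event the theorem's second $\epsilon$ exists to control, and without it the identification and recovery steps can silently fail.
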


\begin{proof}
The proof of the algorithm has a very similar flavor to the proof of $\mathbf{ERR}$ in \cite{ilqarsingle}.
The key point to notice here is the success probability of finding a row that is noise is still at least $\frac{ \psi(U) }{m}$. 
The reason behind this is that, assuming that there are $| \Omega |$ many rows those are noisy, and if we observe an entry from any of them we are guaranteed  notice the linear independence.\\[2ex]
However, if we observe any entry from points those are not noisy for the lemma 8 in the paper \cite{poczos2020optimal} we have that the probability of detection of linear independence is simply $\frac{\psi(U) }{n_2-|\Omega|}$.
Therefore,  the probability of detection of any new column is simply:

\begin{align*}
    \frac{|\Omega|}{n_1} + \frac{n_1-|\Omega|}{n_1} \frac{ \psi(U) }{n_1-|\Omega|} =
    \frac{|\Omega|+\psi(U)}{n_1}
\end{align*}
which is lower bounded by $\frac{\psi(U)}{n_1}$. An interesting observation here is due to \cite{poczos2020optimal} which simply states that after

\begin{align*}
    \frac{2n_1}{\psi(U)}\big(r+|\Omega|+ 2 +\log{\frac{1}{\epsilon}}\big)    
\end{align*}
observations, we detect all $|\Omega|+r$ many linearly independent rows with probability of $1-\epsilon$.
Here we can observe that, once we detect all non-noisy rows, the probability of detection of noisy rows reduces.
But this reduced probability will always be lower bounded by $1/n_1$. 
Therefore, similar to the argument above, after observing
\begin{align*}
    2n_1(|\Omega|+2+\log{\frac{1}{\epsilon}})
\end{align*}
entries with probability of $1-\epsilon \:$ all the noisy rows will be detected.
Therefore, using the union bound argument, we can conclude that, with probability $1-2 \epsilon$ after observing
\begin{align*}
    2n_1(|\Omega|+2+\log{\frac{1}{\epsilon}}) + \frac{2n_1}{\psi(U)}\big(r+|\Omega|+ 2 +\log{\frac{1}{\epsilon}}\big)    
\end{align*}
entries, we detect $r$ many linearly independent rows, and also we are able to detect noisy rows.
Which concludes the statement of the theorem.

\end{proof}

\bibliography{ms}



\end{document}